\documentclass[review]{elsarticle}

\usepackage{hyperref}

\usepackage{algorithm}
\usepackage{algorithmic}
\usepackage{amsmath}
\usepackage{amssymb}
\usepackage{amsthm}
\usepackage{xcolor}
\usepackage{subfig}

\newtheorem{theorem}{Theorem}


\journal{Journal of \LaTeX\ Templates}









\bibliographystyle{elsarticle-num}

\begin{document}

\begin{frontmatter}

\title{Predictive Modeling through Hyper-Bayesian Optimization}


\author[1]{Manisha Senadeera\corref{cor1}}
\ead{manisha.senadeera@deakin.edu.au}
\author[1]{Santu Rana}
\ead{santu.rana@deakin.edu.au}
\author[1]{Sunil Gupta}
\ead{sunil.gupta@deakin.edu.au}
\author[1]{Svetha Venkatesh}
\ead{svetha.venkatesh@deakin.edu.au}

\address[1]{Deakin University, Applied Artificial Intelligence Institute, Geelong, Australia \\
75 Pigdons Rd, Waurn Ponds VIC 3216, Australia}
\cortext[cor1]{Corresponding author}




\begin{abstract}
Model selection is an integral problem of model based optimization techniques such as Bayesian optimization (BO). Current approaches often treat model selection as an estimation problem, to be periodically updated with observations coming from the optimization iterations. In this paper, we propose an alternative way to achieve both efficiently. Specifically, we propose a novel way of integrating model selection and BO for the single goal of reaching the function optima faster. The algorithm moves back and forth between BO in the model space and BO in the function space, where the goodness of the recommended model is captured by a score function and fed back, capturing how well the model helped convergence in the function space. The score function is derived in such a way that it neutralizes the effect of the moving nature of the BO in the function space, thus keeping the model selection problem stationary. This back and forth leads to quick convergence for both model selection and BO in the function space. In addition to improved sample efficiency, the framework outputs information about the black-box function. Convergence is proved, and experimental results show significant improvement compared to standard BO.
\end{abstract}

\begin{keyword}
Bayesian Optimisation\sep Gaussian Processes\sep Model selection \sep Length Scale Tuning \sep Monotonicity
\end{keyword}

\end{frontmatter}


\section{Introduction}
In cognitive science, “Predictive process modeling” is used as a plausible biological model of the brain \cite{Clark2013,Clark2015a}. The  mind is depicted as a multi-layer prediction machine performing top-down predictions of the world that are met with bottom-up streams of sensory data.  The top layers are continuously predicting what is about to be sensed. The model is refined or altered to fit the incoming signals, and this “dance” continues until “equilibrium” is reached. 

\begin{figure}[!tb]
	\begin{center}
		\includegraphics[scale=0.25]{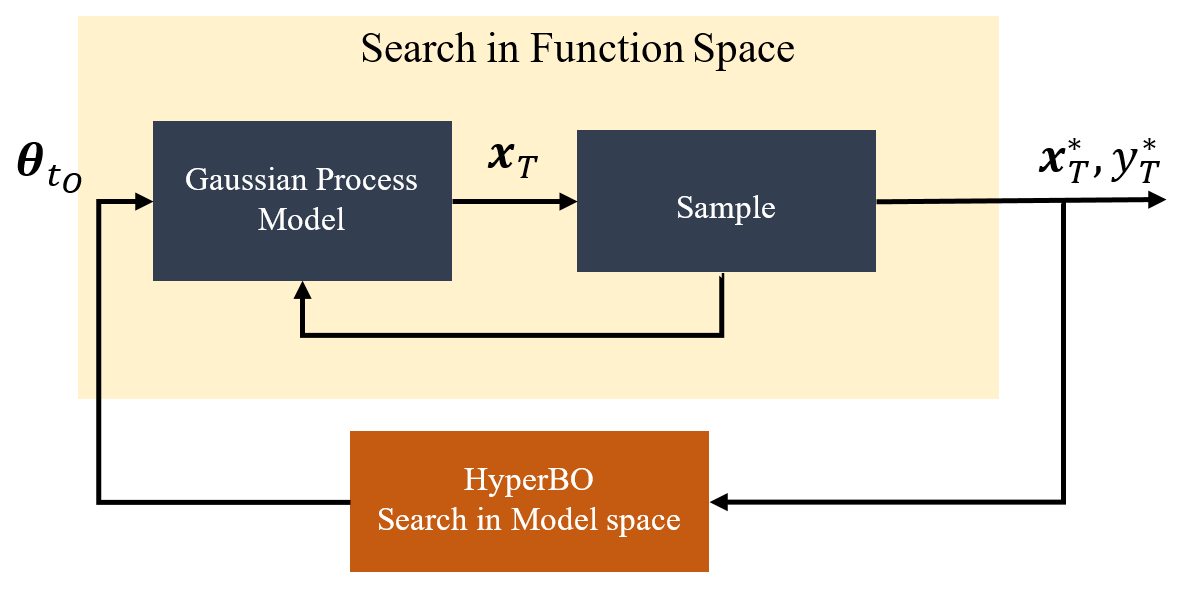}
	\end{center}
	\caption{Predictive modeling framework with HyperBO. Optima search conducted by BO in function space (yellow), its hyperparameters $\boldsymbol{\theta}_{t_{O}}$ chosen by BO in model space (orange).  $\boldsymbol{x}_{t}$ is recommendation and $\boldsymbol{x}^{*}_{t}$,$y^{*}_{t}$ is best recommendation. $t_O$ is the BO iteration in the model space and $T$ is the BO iteration in the function space.}
	\label{fig:HyerBO_framework}
\end{figure}

We draw inspiration from this idea and present the Predictive Modeling Framework, integrating model selection and Bayesian Optimization (BO) to accelerate finding the function optimum (Figure \ref{fig:HyerBO_framework}). Our framework uses a top level model generator (HyperBO) which is modeled through BO, an efficient method for global optimization of expensive and noisy black-box functions \cite{DBLP:journals/corr/abs-1012-2599,Snoek2012,Freitas2015}. It searches in the model space, recommending  hyperparameters ($\boldsymbol{\theta}$) for the underlying BO modeling the function space. The algorithm moves between BO in the model space and BO in the function space, where the recommended model's goodness is fed back using a score function that captures how well the model helped convergence in the function space. The score function is a key feature as it is able to offset for the moving nature of BO in the function space. This back and forth leads to quick convergence for both model selection and function optimisation. The framework improves sample efficiency and outputs information about the black-box. Existing convergence analysis of BO algorithms assume an optimal model. We prove convergence w.r.t. a changing model, being optimised alongside the optima search. Our framework is evaluated on five real world regression problems. The model selection involves finding the monotonicity information that best describes the function (Diabetes and Boston Housing), and the kernel parameters that best model the function (Concrete Compression, Fish toxicity, and Power Plant). Our contributions are as follows:

\begin{itemize}
	\item Development of Predictive Modeling framework for BO incorporating automatic model searching (HyperBO);
	\item Development of a stationary scoring function to review performance of the model search; and
	\item Convergence analysis of our framework.
\end{itemize}

\subsection{Related Work}
In the space of automating model design, \cite{Duvenaud2013} proposes a generative kernel grammar with the ability of defining the space of kernel structures which are built compositionally from a small number of base kernels. \textcolor{blue}{In a similar vein, \cite{gonen2013localized} proposed learning multiple kernels that are weighted and summed in a manner akin to mixture of expert. Work by \cite{pmlr-v70-wang17h} learns a structured kernel to support Bayesian Optimisation in high dimensional search spaces.} \cite{malkomes2016bayesian} proposes the BOMS method, an automatic framework for exploring the potential space of models using Bayesian Optimization. Building on this work to implement model searching within the optimization search of a black-box function \cite{Malkomes:2018:ABO:3327345.3327498} presents ABOMS. This work utilised Bayesian optimization in the model search space to determine the best model hyperparameters, whilst at the same time optimizing the black box function. The fit of the model is measured by the normalized marginal likelihood, allowing comparison of models across iterations. However, this is an indirect measure of the goodness and may not fully align with the goal of the optimization.

\subsection{Problem definition}
We assume a black-box function $f:  \Re^{D}\rightarrow \Re$ with an observation model of $y=f(\textbf{x})+\epsilon$ where $\epsilon \sim N(0,\sigma_{n}^{2})$, $\textbf{x} \in \Re^{D}$ and $y \in \Re$. Our goal is to discover  $\textbf{x} \in \chi \subset \Re^{D}$, where $\chi$ is a compact subspace of $\Re^{D}$ such that:

\begin{equation}\label{eq:BO_optimiser}
\textbf{x}^{*} = \operatorname*{argmax}_{x\in \chi}  f(\textbf{x})
\end{equation}

\subsection{Bayesian Optimization}
Bayesian optimization is a sample efficient method for the optimization of black box functions. The optimization occurs by constructing a Gaussian Process (GP) model of the function. GP is used as a prior over the space of smooth functions and can be fully described by a mean and covariance function \cite{Rasmussen:2005:GPM:1162254}. Assuming a mean of zero, the GP is then defined by its covariance function $f \sim GP(0,k(\textbf{x},\textbf{x'}))$. Given some observations $D = \{\textbf{x}_i,y_i\}_{i=1}^{t}$, the posterior is also a GP for which the predictive mean and covariance can be computed via,

\begin{equation} \label{eq:mu}
\mu_{t}(\textbf{x}) = \textbf{k}_{t}(\textbf{x})^T(K_{t} + \sigma_{n}^2\mathbf{I})^{-1}\textbf{y}_{t}
\end{equation}
\begin{equation} \label{eq:sigma}
k_{t}(\textbf{x},\textbf{x}') = k(\textbf{x},\textbf{x}') - \textbf{k}_t(\textbf{x})^T(K_t + \sigma_{n}^2\mathbf{I})^{-1}\textbf{k}_t(\textbf{x}')
\end{equation}
with variance $\sigma_{t}^2(\textbf{x}) = k_{t}(\textbf{x},\textbf{x})$, $\textbf{k}_t(\textbf{x}) = [k(\textbf{x}_i,\textbf{x})]_{i=1}^t$ and, $K_t = [k(\textbf{x}_t,\textbf{x}_{t^{'}})]_{t,t^{'}}$ is the kernel Gram matrix.

The squared exponential kernel of the form shown in (\ref{eq:sekernel}) is a popular choice of kernel and will be used in this paper.

\begin{equation} \label{eq:sekernel}
k(\textbf{x}_i,\textbf{x}_j) = \sigma_{f}^2 \text{exp}(-\frac{1}{2} \sum_{d=1}^{D} \frac{(x_{d,i} - x_{d,j})^2}{l_{d}^2})
\end{equation}
where $\sigma_{f}^2$ is the signal variance, and $l_{d}$ is the length scale for the $d$-th dimension.

The mean and variance from the GP is used to construct an acquisition function $a(\boldsymbol{x})$ which is then optimized to select the next best sample point. 

\begin{equation}\label{eq:BO_acquisition}
\textbf{x}_{t} = \operatorname*{argmax}_{x\in \chi}  a_{t}(\textbf{x})
\end{equation}

There are multiple acquisition functions used in Bayesian optimization problems including probability of improvement, GP-UCB, Expected Improvement (EI) and Thompson Sampling \cite{DBLP:journals/corr/abs-1012-2599,Basu2017}.

\section{Proposed Framework}
\subsection{HyperBO Framework}
Our overarching objective is to search for the optima of the black-box function. Whilst conducting the Bayesian optimization search for the function optima, the model search for this BO is  guided by another BO in the model space, we call it HyperBO. After every $K$ iterations of the inner BO in the function space, a new model, described by a vector of its hyperparameters ($\boldsymbol{\theta}$), is applied. After $K$ iterations with this new model, its performance in progressing the search to the optima is assessed using a scoring function ($g(\boldsymbol{\theta})$). This score is then fed back to the GP within HyperBO ($GP^{\theta}$). The iterations of the BO in the model space are denoted by $t_O$. We use Thompson Sampling in order to select the next best model hyperparameter values for the HyperBO. The choice of Thompson sampling as an acquisition function is guided by the need to analyse the convergence of our method, where we use the analysis similar to \cite{Basu2017}. However, we use GP-UCB as an acquisition function for the BO in the function space, as it provides a stricter convergence guarantee. Over time, the HyperBO will converge to the ideal model hyperparameters, accelerating the convergence of the BO in the function space. We note that by using HyperBO for model selection, we use a more active approach of model selection using prediction, rather than the usual passive way of using estimation. Algorithms \ref{alg:AMMSF} - \ref{alg:HyperBO} describe this Predictive modeling framework.

Mathematically, our optimization problem is defined as:
\begin{align*}
\boldsymbol{\theta}^{*} = & \operatorname*{max}_{\boldsymbol{\theta}\in \boldsymbol{\Theta}} g(\boldsymbol{\theta})
\end{align*}

where $g(\boldsymbol{\theta}) \triangleq$ score function
\begin{align*}
A_{p}^{T} : & p, \boldsymbol{\theta}, T \rightarrow \boldsymbol{x} \text{ (inner optimization)}\\
p \triangleq & \operatorname*{max}_{\boldsymbol{x}\in \boldsymbol{\chi}} f(\boldsymbol{x})
\end{align*}
Where $T$ is the iteration of the BO in the function space, $g(\boldsymbol{\theta})$ is the scoring function, $A_{p}^{T}$ is the BO algorithm for problem $p$, at $T$'th iteration, and $p$ is the optimization problem in the function space. To maintain the stationarity of the HyperBO optimization problem, i.e. maximizing the scoring function $g(\boldsymbol{\theta})$, we should make the scoring function independent of the iteration ($T$) of the BO in the function space. In the next we show how to use the convergence results of Bayesian optimization to derive such a scoring function.

\begin{algorithm}[t]
	\caption{Predictive Modeling Framework}
	\label{alg:AMMSF}
	\textbf{Input}: black-box function $f$, initial data $\boldsymbol{D}_0$ made of (\textbf{x},\textbf{y})\\
	\textbf{Parameter}: number of initial models $m$, number of iteration per model $K$, number of total iterations $R$\\
	\textbf{Output}: $\boldsymbol{x}^{*}$ and $\boldsymbol{\theta}^{*}$
	\begin{algorithmic}[1] 
		\STATE $M_0 \gets \emptyset$
		\STATE Construct initial model set
		\FOR{$t_{O} = 1, 2, ..., m$}
		\STATE $y^{+} \leftarrow max(\boldsymbol{y})$
		\STATE Generate random hyperparameter vector $\boldsymbol{\theta}_{t_{O}}$
		\STATE Generate score $S_{t_{O}}$ for $\boldsymbol{\theta}_{t_{O}}$ and output $\boldsymbol{D}_{t_{O}}$ from Alg 2: Model Score
		\STATE Update model data $\boldsymbol{M}_{t_{O}}=\boldsymbol{M}_{t_{O}-1} \cup (\boldsymbol{\theta}_{t_{O}},S_{t_{O}})$
		\ENDFOR
		\STATE Begin Bayesian Optimization of black-box function
		\WHILE{$t_{O} \leq R$}
		\STATE $y^{+} \leftarrow \text{max}(\boldsymbol{y})$
		\STATE Generate $\boldsymbol{\theta}_{t_{O}}$ from Alg 3: HyperBO
		\STATE Generate score $S_{t_{O}}$ for $\boldsymbol{\theta}_{t_{O}}$ and output $\boldsymbol{D}_{t_{O}}$ from Alg 2: Model Score
		\STATE Update model data $\boldsymbol{M}_{t_{O}}=\boldsymbol{M}_{t_{O}-1} \cup (\boldsymbol{\theta}_{t_{O}},S_{t_{O}})$
		\ENDWHILE
	\end{algorithmic}
\end{algorithm}
\begin{algorithm}
	\caption{Model Score}
	\label{alg:ScoringModel}
	\textbf{Input}: black-box function $f$, data $\boldsymbol{D}_{t_{O}-1}$ hyperparameter vector $\boldsymbol{\theta}_{t_{O}}$, current optima $y^{+}$\\
	\textbf{Parameter}: number of iteration per model $K$\\
	\textbf{Output}:model score $S_{t_{O}}$, data $\boldsymbol{D}_{t_{O}}$
	\begin{algorithmic}[1]
		\STATE $\boldsymbol{D}_{T} = \boldsymbol{D}_{t_{O}-1}$
		\FOR{$i = 1, 2, ..., K$}
		\STATE $T = t_{O}+i$
		\STATE Fit $GP_{t_{O}}$ with $\boldsymbol{\theta}_{t_{O}}$ and $\boldsymbol{D}_{T-1}$
		\STATE Compute $\mu_{T}$ and $\sigma_{T}$ according to (\ref{eq:mu}) and (\ref{eq:sigma})
		\STATE Construct and maximise acquisition function $a_{t}$
		\STATE Sample function $y_{T} = f(\boldsymbol{x}_{T})$ and update data $\boldsymbol{D}_{T} = \boldsymbol{D}_{{T}-1} \cup (\boldsymbol{x}_{T},y_{T})$
		\ENDFOR
		\STATE $f^{+}(A_p(\boldsymbol{\theta})) \leftarrow \text{max}(\boldsymbol{y})$
		\STATE Score performance $S_{t_{O}}$ of $\boldsymbol{\theta}_{t_{O}}$ with (\ref{eq:score2})
		\STATE $\boldsymbol{D}_{t_O} = \boldsymbol{D}_{T}$
	\end{algorithmic}
\end{algorithm}
\begin{algorithm}
	\caption{HyperBO}
	\label{alg:HyperBO}
	\textbf{Input}: model data $\boldsymbol{M}_{t_{O}}$, model score $S$\\
	\textbf{Output}: hyperparameter $\boldsymbol{\theta}_{t_{O}}$
	\begin{algorithmic}[1]
		\STATE $s^{+} \leftarrow \text{max}(S)$
		\STATE Fit $GP_{t_{O}}^{\theta}$ with $\boldsymbol{M}_{t_{O}}$
		\STATE Compute $\mu_{t_{O}}$ and $\sigma_{t_{O}}$ according to (\ref{eq:mu}) and (\ref{eq:sigma})
		\STATE Select next $\boldsymbol{\theta}_{t_{O}}$ through Thompson Sampling
	\end{algorithmic}
\end{algorithm}

\subsection{Scoring Function}
As mentioned before, a key innovation in our scoring metric is the ability to offset for the moving nature of the BO search, thereby effectively allowing it to function as a black-box that can be analysed independently. In order to maintain this required stationarity, we describe this reduction in progress towards the optima in later iterations via the regret bound of GP-UCB as detailed in \cite{Srinivas:2010:GPO:3104322.3104451}, utilizing it as a means of describing the average regret. 

Regret Bound $R_{T}$ is of $O(\sqrt{T\gamma_{T}\text{log}(\mid D \mid)})$ where $\gamma_{T}$ is of $O((\text{log} T)^{d+1})$. $T$ is the iteration number.

\begin{align*}
\frac{R_{T}}{T} = & O\left(\sqrt{\frac{\gamma_{T} \text{log}(\mid D \mid)}{T}} \right)\\
\frac{R_{T}}{T} = & O\left(\sqrt{\frac{(\text{log} T)^{d+1} \text{log}(\mid D \mid)}{T}} \right)\\
\frac{R_{T}}{T} \sim & C\sqrt{\frac{\text{log} T^{d+1}}{T}}
\end{align*}

For the score we can ignore $C$ as it is a constant. We use the shape of $\frac{R_{T}}{T}$ to normalize the gain in performance during any interval of the BO iterations in the function space, making the gain independent of the iteration ($T$). This is what we use as our scoring function.

\begin{equation}\label{eq:score1}
g(\boldsymbol{\theta}) = \frac{f^{+}(A_{p}(\boldsymbol{\theta}))-y^{+}}{\sqrt{\frac{\text{log } T^{d+1}}{T}}}
\end{equation}
where $y^{+}$ is the best observation before BO in the function space with the new hyperparameter, $f^{+}(A_{p}(\boldsymbol{\theta}))$ is the best observation after BO with the new hyperparameter.

In practise, it was observed that adding a regularization term to (\ref{eq:score1}) can improve the performance further. This term differs slightly depending on the hyperparameter type being tuned. Some examples will be discussed in the later sections.

\subsection{Convergence Analysis}
The Predictive Modeling framework with HyperBO can be proven to converge under the Theorem below.

\begin{theorem}
	Let $\delta \in (0,1)$ and $\eta \in (0,1)$. Assuming the kernel functions used in both the BO in the function space and the HyperBO, $k(.,.)$ provides a high probability guarantee on the sample paths of GP derivative to be L'-Lipschitz continuous, and the function $f(A^{t}_{p}(\boldsymbol{\theta}))$ is L-Lipschitz continuous. There exists a $t_O = T_{S} \leq T_{O}$ beyond which $\left\|\boldsymbol{\theta}^{*}-\boldsymbol{\theta}_{t_{O}}\right\| < \epsilon$ is satisfied with probability $1-\delta$. Furthermore, the average cumulative regret of the Predictive Modeling framework will converge to $\textcolor{red}{\lim_{T_{I} \rightarrow \infty}} R_{T}/T = \epsilon L$.
\end{theorem}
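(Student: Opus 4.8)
The plan is to decompose the statement into two linked claims and prove them in sequence: first, that the Thompson-Sampling iterates of HyperBO converge in parameter space to $\boldsymbol{\theta}^{*}$ with probability $1-\delta$; and second, that this $\epsilon$-closeness in the model space propagates into an $\epsilon L$ limiting average regret in the function space. The hinge connecting the two analyses is the stationarity of the score $g(\boldsymbol{\theta})$, so I would begin by making that precise. Concretely, I would verify that the normalization in (\ref{eq:score1}) renders $g(\boldsymbol{\theta})$ independent of the function-space iteration $T$: the numerator $f^{+}(A_{p}(\boldsymbol{\theta}))-y^{+}$ is the improvement accrued over a block of $K$ inner iterations, which by the GP-UCB regret bound of \cite{Srinivas:2010:GPO:3104322.3104451} scales like $R_{T}/T \sim C\sqrt{\log T^{d+1}/T}$; dividing by exactly this factor cancels the $T$-dependence up to the constant $C$ and leaves a quantity reflecting only the intrinsic quality of $\boldsymbol{\theta}$. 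Establishing this rigorously, rather than through the order-of-magnitude heuristic in the text, is what legitimizes treating HyperBO as a fixed, time-homogeneous black-box problem and is the precondition for invoking any off-the-shelf regret bound.

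With $g$ a stationary objective, I would next invoke the Thompson-Sampling regret guarantee of \cite{Basu2017} on $GP^{\theta}$. The hypothesis that the GP derivative sample paths are $L'$-Lipschitz with high probability is precisely the smoothness condition those bounds require in order to control the discretization error over the compact set $\boldsymbol{\Theta}$; the parameters $\delta$ and $\eta$ enter here as the confidence level and the failure/posterior-sampling budget of that analysis. The regret bound yields sublinear cumulative regret, hence vanishing simple regret $g(\boldsymbol{\theta}^{*})-g(\boldsymbol{\theta}_{t_O}) \to 0$ on an event of probability $1-\delta$. To convert this function-value gap into the parameter bound $\|\boldsymbol{\theta}^{*}-\boldsymbol{\theta}_{t_O}\| < \epsilon$, I would use the behavior of $g$ near its maximizer, where the gradient vanishes, and extract a threshold index $T_{S} \le T_{O}$ beyond which the $\epsilon$-ball is entered.

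For the regret claim I would then carry the $\epsilon$-closeness into the function space. For $t_O \ge T_{S}$ every inner block runs GP-UCB under a model within $\epsilon$ of $\boldsymbol{\theta}^{*}$, so its average regret splits into a vanishing $O(\sqrt{\gamma_{T}\log|D|/T})$ term that GP-UCB attains under the \emph{best} model and a bias due to model mismatch. By the $L$-Lipschitz continuity of $\boldsymbol{\theta}\mapsto f(A_{p}^{t}(\boldsymbol{\theta}))$, the mismatch term is bounded by $L\|\boldsymbol{\theta}^{*}-\boldsymbol{\theta}_{t_O}\| < L\epsilon$, so letting $T_{I}\to\infty$ collapses the first term and leaves $\lim R_{T}/T = \epsilon L$.

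The main obstacle I anticipate is the distance-from-regret step in the model space. Regret bounds natively control function values, and smoothness of $g$ alone gives only the upper bound $g(\boldsymbol{\theta}^{*})-g(\boldsymbol{\theta}) \le \tfrac{L'}{2}\|\boldsymbol{\theta}^{*}-\boldsymbol{\theta}\|^{2}$, i.e.\ it says small distance implies small regret, which is the \emph{wrong} direction. Extracting a genuine parameter-space guarantee from vanishing simple regret requires the converse quadratic-growth behavior near $\boldsymbol{\theta}^{*}$, which is not delivered by the stated $L'$-Lipschitz hypothesis and would need either a local non-degeneracy (strong-concavity) assumption at the maximizer or a careful use of GP posterior concentration to rule out a flat ridge of near-optimal models. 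Handling the probabilistic bookkeeping, in particular taking a union bound over the high-probability Lipschitz event, the Thompson-Sampling regret event, and the GP-UCB confidence events so that the single budget $\delta$ suffices, is the remaining technical point to discharge.
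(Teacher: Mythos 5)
Your second claim is exactly the paper's argument, executed the same way: insert $f(A_{p}^{T}(\boldsymbol{\theta}^{*}))$ into the cumulative regret, bound the best-model term by the GP-UCB rate $O(\sqrt{T\log T})$, bound the mismatch term by $L\|\boldsymbol{\theta}^{*}-\boldsymbol{\theta}_{t}\| < L\epsilon$ using the assumed Lipschitz continuity of $\boldsymbol{\theta}\mapsto f(A_{p}^{T}(\boldsymbol{\theta}))$, and let $T\to\infty$ to get $\epsilon L$. But your route to the first claim diverges from the paper and, as you yourself concede, does not close. You propose to obtain sublinear cumulative regret for $g$ from the Thompson-sampling analysis, deduce vanishing simple regret $g(\boldsymbol{\theta}^{*})-g(\boldsymbol{\theta}_{t_{O}})\to 0$, and then convert this value gap into the parameter bound $\|\boldsymbol{\theta}^{*}-\boldsymbol{\theta}_{t_{O}}\|<\epsilon$ --- a step that genuinely fails without a quadratic-growth or strong-concavity condition at the maximizer, which is neither stated in the theorem nor derivable from the $L'$-Lipschitz hypothesis on derivative sample paths. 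The paper never faces this conversion, because the result it imports from \cite{Basu2017} is not a regret bound but an exponential concentration bound directly on the iterate distance,
\begin{align*}
\mathrm{Prob}\left(\left\|\boldsymbol{\theta}^{*}-\boldsymbol{\theta}_{t}\right\|>\epsilon\right) \leq C^{0}_{\epsilon}\,\mathrm{exp}(-C^{1}_{\epsilon}t),
\end{align*}
from which $T_{S}$ is read off by solving $\delta = C^{0}_{\epsilon}\,\mathrm{exp}(-C^{1}_{\epsilon}T_{S})$. The missing idea in your proposal is therefore that the cited Thompson-sampling analysis already speaks in parameter distance rather than in function values; invoking it in that form dissolves your anticipated distance-from-regret obstacle entirely, and no flat-ridge or non-degeneracy assumption is needed.

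Two secondary observations. First, your insistence on rigorously establishing the $T$-independence of $g$ and on a union bound across the Lipschitz, Thompson-sampling, and GP-UCB confidence events is more careful than the paper, which treats stationarity as an order-of-magnitude normalization argument and does no explicit probabilistic bookkeeping; these refinements are compatible with the paper's structure and would strengthen it, but they are not where the proof lives. Second, if you do repair your route, note that the paper's own solved expression $T_{S} = C^{1}_{\epsilon}\log(C^{0}_{\epsilon}/\delta)$ appears to transpose the constant (it should be $T_{S} = \log(C^{0}_{\epsilon}/\delta)/C^{1}_{\epsilon}$), so the distance-concentration lemma, not that algebra, is the part to take as the load-bearing step.
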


\begin{proof}
If BO with Thompson Sampling is used \cite{Basu2017}, then we know that
\begin{align*}
Prob(\left\|\boldsymbol{\theta}^{*}-\boldsymbol{\theta}_{t}\right\|>\epsilon) \leq C^{0}_{\epsilon}\text{exp}(-C^{1}_{\epsilon}t)
\end{align*}
where $C^{0}_{\epsilon}$, $C^{1}_{\epsilon}$ are $\epsilon$ dependent constants.

This implies we can set any arbitrary $\epsilon \ll 1$ and an arbitrary low probability $\delta \ll 1$. Then there exists a $t_{O} = T_{S}$ beyond which $\left\|\boldsymbol{\theta}^{*}-\boldsymbol{\theta}_{t_{O}}\right\| < \epsilon$ happens with high probability ($1 - \delta$).
\begin{align*}
\delta = & C^{0}_{\epsilon}\text{exp}(-C^{1}_{\epsilon}T_{S})\\
\implies T_{S} = & C^{1}_{\epsilon}\text{log}(\frac{C^{0}_{\epsilon}}{\delta})
\end{align*}

Using regret as defined in \cite{Srinivas:2010:GPO:3104322.3104451}, and recalling that $A_{p}^{T}(\boldsymbol{\theta}_{t})=\boldsymbol{x}_{T}$, we can write the cumulative regret as:

\begin{align*}
R_{T} = & \sum_{t=1}^{T_{O}}\sum_{t'=1}^{K}|f(\boldsymbol{x}^{*})-f(A_{p}^{T}(\boldsymbol{\theta}_{t}))|
\end{align*}
where $T=(t-1)\times K+t'$ is the actual iterations that the BO in the function space has gone through. Next, we break down $R_{T}$ by introducing $f(A_{p}^{T}(\boldsymbol{\theta}^{*}))$, where $\boldsymbol{\theta}^{*}=\text{argmax }g(\boldsymbol{\theta})$.
\begin{align*}
R_T= & \sum_{t=1}^{T_{O}}\sum_{t'=1}^{K} |f(\boldsymbol{x}^{*})-f(A_{p}^{T}(\boldsymbol{\theta}^{*})) +f(A_{p}^{T}(\boldsymbol{\theta}^{*}))\\
&-f(A_{p}^{T}(\boldsymbol{\theta}_t))|\\ 
\leq & \underbrace{\sum_{t=1}^{T}|f(\boldsymbol{x}^{*})-f(A_{p}^{T}(\boldsymbol{\theta}^{*}))|}_{O(\sqrt{T\text{log}T})}\\
&+\sum_{t=1}^{T_{O}}\sum_{t'=1}^{K} |f(A_{p}^{T}(\boldsymbol{\theta}^{*}))-f(A_{p}^{T}(\boldsymbol{\theta}_t))|\\ 
\leq & O(\sqrt{T\text{log}T})+\\
&T_{0} \operatorname*{max}_{t= [1,T_{0}]}(K \operatorname*{max}_{t'= [1,K]}\underbrace{|f(A_{p}^{T}(\boldsymbol{\theta}^{*}))-f(A_{p}^{T}(\boldsymbol{\theta}_t))|}_{L\left\|\boldsymbol{\theta}^{*}-\boldsymbol{\theta}_{t}\right\|})
\end{align*}
Because the $GP$ predictive posterior is smooth w.r.t $\boldsymbol{\theta}$ it makes the acquisition function to be smooth as well w.r.t. $\boldsymbol{\theta}$. So we can assume that $f(A_{p}^{T}(\boldsymbol{\theta}))$ is L-Lipschitz. Hence,
\begin{align*}
R_T\leq & O(\sqrt{T\text{log}T})+\underbrace{T_{0} K}_{T} L\underbrace{\left\|\boldsymbol{\theta}^{*}-\boldsymbol{\theta}_{t}\right\|}_{\epsilon} \\ 
\leq & O(\sqrt{T\text{log}T})+T \epsilon L \\ 
& \text{Taking the limit }\lim_{T \rightarrow \infty} \frac{R_{T}}{T} = \epsilon L
\end{align*}
\end{proof}
Although the regret does not vanish in our case, it can be made arbitrary small by setting $\epsilon$ very small. We must also note that the existing convergence analysis assumes that the best model is being used throughout the BO, ignoring the effect of estimation of the model on the convergence. In fact in some analysis it is shown that running model selection would fail the convergence \cite{bull2011convergence}. In contrast, we provide the convergence guarantee of our whole approach, including the model selection part, thus making it more useful to look at.

\subsection{Application 1 - Length Scale Tuning}
It is common for the length scale of the model GP to be tuned periodically with observations during the optimization process. We instead separate the length scale tuning outside of the BO and into the HyperBO framework in order to achieve faster convergence. The length scales for each dimensions, expressed within the $\boldsymbol{\theta}$ vector, are assumed to be anisotropic.

\subsubsection{Length Scale Regularization}
A regularization is applied to the score function to weight the score towards larger length scale values to prevent over sensitivity to a given input dimension. The overall score function used is given by
\begin{equation}\label{eq:length_score}
g(\boldsymbol{\theta}) = \frac{f^{+}(A_{p}(\boldsymbol{\theta}))-y^{+}}{\sqrt{\frac{\text{log} T^{d+1}}{T}}}\left(1-\lambda\left\|\boldsymbol{\theta}\right\|\right)
\end{equation}
where, $\lambda$ is the regularization weight.

\subsection{Application 2 - Monotonicity Tuning}
If a function is monotonic to certain variables, access to monotonicity information can greatly improve the fit of a GP and increase the sample efficiency and convergence during the BO search \cite{Wang2018,Golchi2015,Li2018}. However it is often the case that this information is not available prior to experimentation. As such, we apply our framework for monotonicity discovery.

\subsubsection{Gaussian Process with monotonicity}

\cite{riihimaki2010gaussian} proposed a method of incorporating monotonicity information into a Gaussian process model. The monotonicity is added at some finite locations of the search space through an observation model, via virtual derivative observations. In this method a parameter $\nu$ is used to control the strictness of the monotonicity information. Further details on the theory and implementation of this method can be found at \cite{riihimaki2010gaussian}.

\subsubsection{Searching monotonic models}

Unlike \cite{riihimaki2010gaussian} which requires monotonicity information be incorporated into the GP prior, we work off the assumption that monotonicity information is unknown. Our proposed framework uses HyperBO to search the space of GP models, and in the case of monotonicity, discover the best monotonic GP model to describe the latent function. 

\cite{riihimaki2010gaussian} specified the strictness parameter $\nu$ as a constant. In our work we utilise this parameter instead as a way of discovering monotonicity in a model, by tuning its value to reflect the strength of monotonicity in a given direction and dimension.
To do this we assume independent $\nu$ values in each dimension thereby describing the strictness parameter as a vector $\boldsymbol{\nu}$. In addition we describe monotonicity in both increasing and decreasing directions for each dimension. The reason for having monotonicity directions be described independently is to have the elements of $\boldsymbol{\nu}$ range between $10^{-6}$ - $10^{0}$ where lower values apply a strong monotonic trend. In order to have an effective search, we conduct the search in the log space, thereby searching for values between -6 to 0, where a lower value (-6) invokes a strong monotonicity. We represent  $\boldsymbol{\nu}$ as the $\boldsymbol{\theta}$ in our algorithm. 

As an example, for a 2 dimensional function, we construct a vector $\boldsymbol{\theta}$ for the monotonicity directions [-1 1 -2 2] with $\boldsymbol{\theta} = [\theta_{1}^{-}, \theta_{1}^{+}, \theta_{2}^{-}, \theta_{2}^{+}]$, where $\theta_{d}^{-}$ is the strictness parameter for decreasing monotonicity and $\theta_{d}^{+}$ is for increasing monotonicity in the d-th dimension. 

\begin{figure}[t]
	\centering
	\subfloat[Concrete]{{\includegraphics[height = 3cm, width=3.5cm]{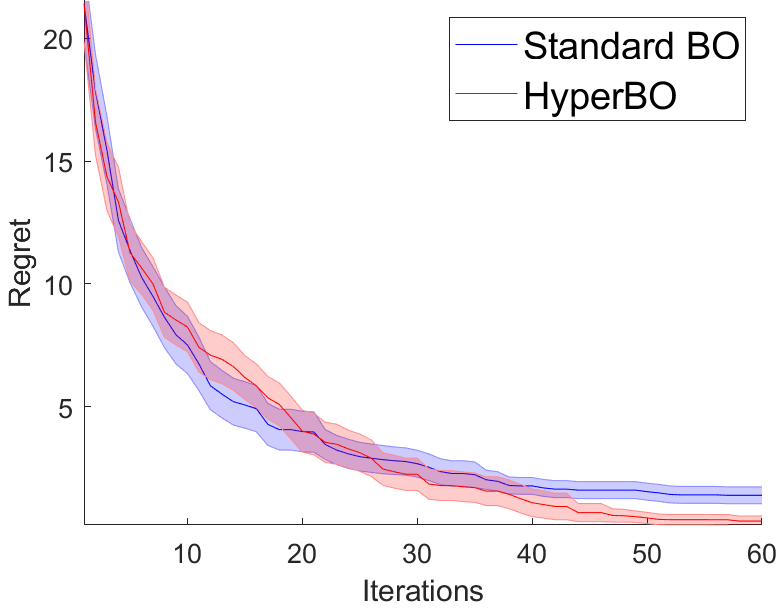}}}%
	\qquad
	\subfloat[Concrete]{{\includegraphics[height = 3cm,width=3.5cm]{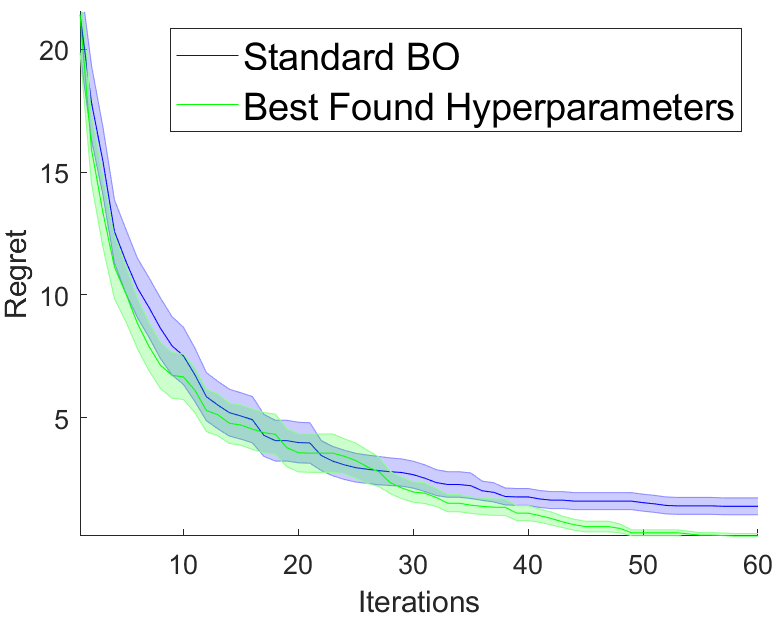}}}%
	\qquad
	\subfloat[Power Plant]{{\includegraphics[height = 3cm,width=3.5cm]{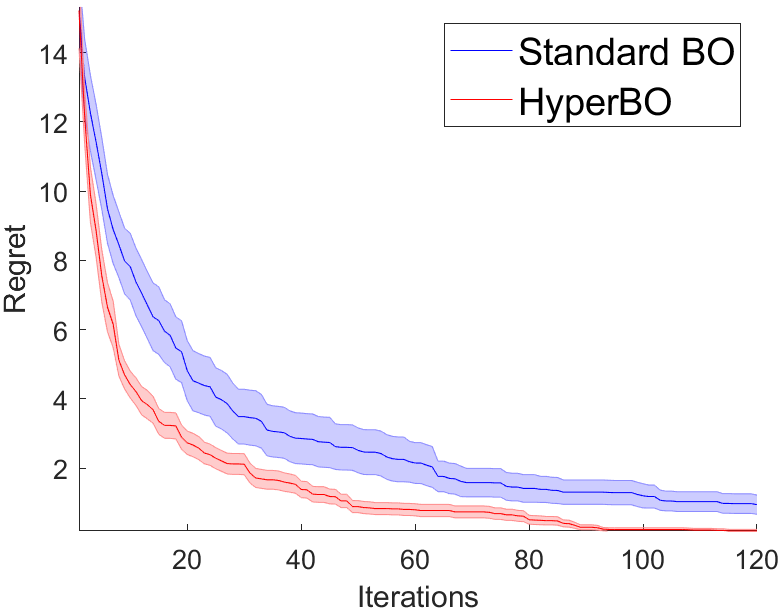}}}%
	\qquad
	\subfloat[Power Plant]{{\includegraphics[height = 3cm,width=3.5cm]{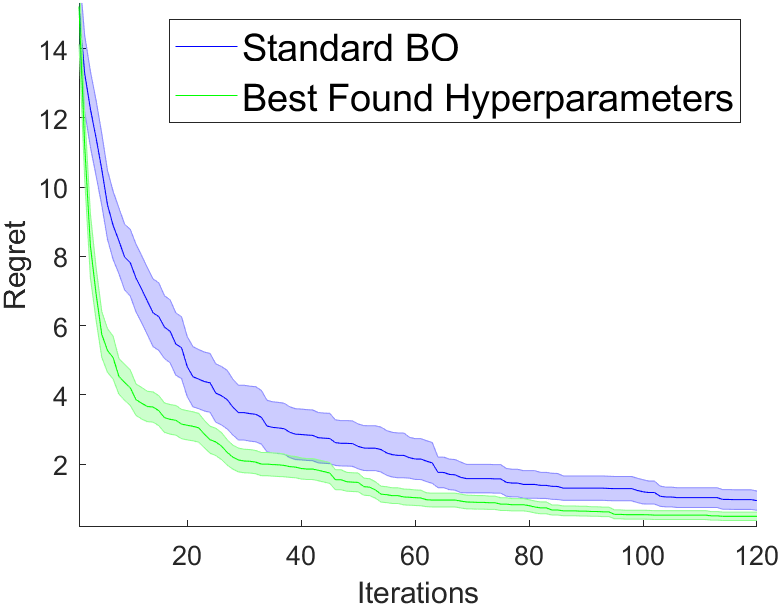}}}%
	\qquad
	\subfloat[Fish Toxicity]{{\includegraphics[height = 3cm,width=3.5cm]{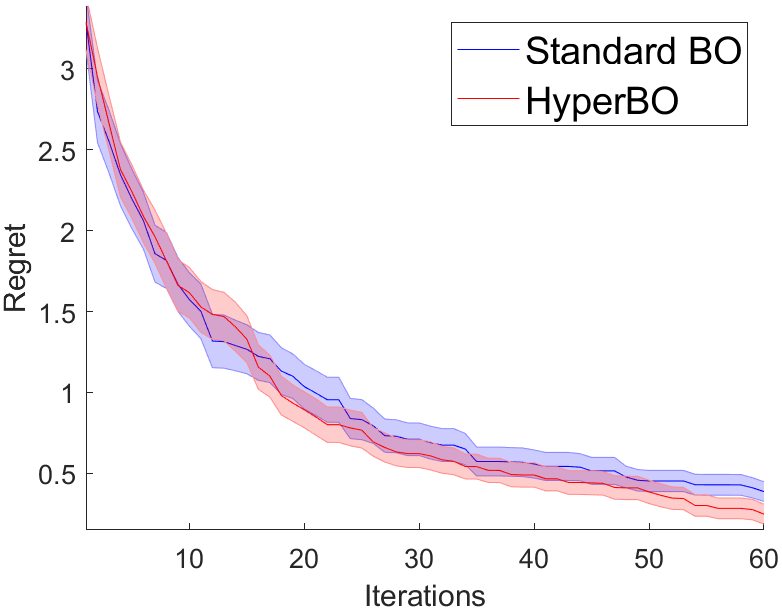}}}%
	\qquad
	\subfloat[Fish Toxicity]{{\includegraphics[height = 3cm,width=3.5cm]{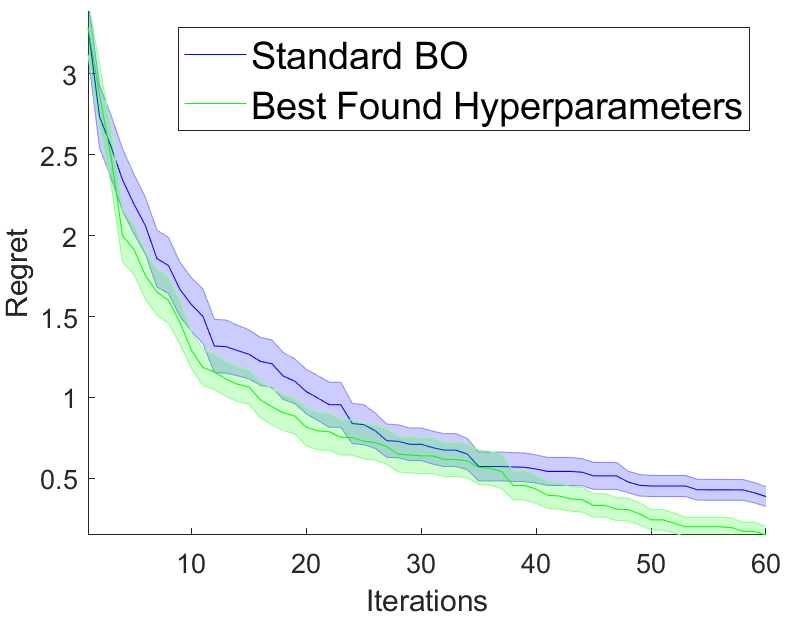}}}%
	\caption{Regret vs Iteration results of Length Scale tuning experiments. Case 1: HyperBO vs BO performance: a,c,e) Demonstrates Predictive modeling framework is better able to discover appropriate length scales to reach the function optima.  b,d,f): Case 2: GP with best monotonicity information outperforms Standard BO.}%
	\label{fig:LengthScale_experiments}%
\end{figure}

\subsubsection{Monotonicity Regularization}
Regularization of the form shown in (\ref{eq:score2}) is applied for scoring monotonic models. A low value of $\boldsymbol{\theta}$ indicates the presence of monotonicity, while a high value indicates a lack of monotonicity. As such a weighting is applied to the score in order to prefer large values of $\boldsymbol{\theta}$ and thereby prevent the presence of monotonicity being over emphasised. In our experiments, the values of $\theta \in [-6, 0]$. The weighting is designed to favour values closer to 0. Below is the form of the score function used with $\lambda$ being the regularization weight.
\begin{equation}\label{eq:score2}
g(\boldsymbol{\theta}) = \frac{f^{+}(A_{p}(\boldsymbol{\theta}))-y^{+}}{\sqrt{\frac{\text{log } T^{d+1}}{T}}}\left(1+\lambda\left\|\boldsymbol{\theta}\right\|\right)
\end{equation}

\section{Experimental Results}
Experimentation was conducted for discovering length scales and monotonicity information of synthetic functions and real world datasets using the Predictive modeling framework. In the experiments presented the framework was used to discover either monotonicity or the length scale values, but the framework can be extended to tune both, if required. Average of 50 trials are reported with error bars indicating standard error. The input ranges of each experiment were normalized. 

The results are shown in two cases: 

\textit{Case 1}: HyperBO vs Standard BO; 

\textit{Case 2}: Optimization performance of Best discovered hyperparameter values - re-running experiment with best scoring $\boldsymbol{\theta}$ from HyperBO run. This is used to test the validity of the hyperparameters found by HyperBO. If prior information is available about the true hyperparameter values (gold standard), these are compared against as well.

Code for all Length Scale experiments and synthetic monotonicity experiment can be found at https://bit.ly/2tElQBX. 
\subsection{Length Scale Experiments}
The Predictive modeling framework was applied to 3 real world datasets for tuning length scales, which included Concrete Compressive Strength, Combined Cycle Power Plant and QSAR fish toxicity, whilst searching for the maxima of the dataset. The length scale ranges were discretized between 0.1 and 0.6, with a step size of 0.05.  The performance of HyperBO in discovering the appropriate length scales for the models is displayed in Figure \ref{fig:LengthScale_experiments}. For these  experiments, the value of $\lambda$ in the scoring function (\ref{eq:length_score}) was set to $\frac{1}{ 0.6 \times d}$ in order to normalize the regularization term. In all experiments, tuning of length scale with the framework improves the performance, as shown in Figure \ref{fig:LengthScale_experiments}.

\subsubsection{Concrete Compressive Strength}
This dataset comprised of 8 variables from 1030 experiments to predict the compressive strength of concrete. The task of BO is to find the maximum compressive strength using the existing data points as a discrete search space.

\subsubsection{Combined Cycle Power Plant}
This dataset comprised of 4 variables from 9568 data points. The input variables, which include temperature, ambient pressure, relative humidity and exhaust vacuum predict  the hourly electrical energy output of a power plant. Here the task is to determine the maximum hourly energy output.

\subsubsection{QSAR fish toxicity}
This dataset is comprised of 6 molecular descriptors of 908 chemicals predicting toxicity towards the fish species known as Pimephales promelas (fathead minnow). The task here is to determine the maximum toxicity towards the fish.

\begin{figure}[t]
	\centering
	\subfloat[Goldstein-Price]{{\includegraphics[height = 3cm, width=3.5cm]{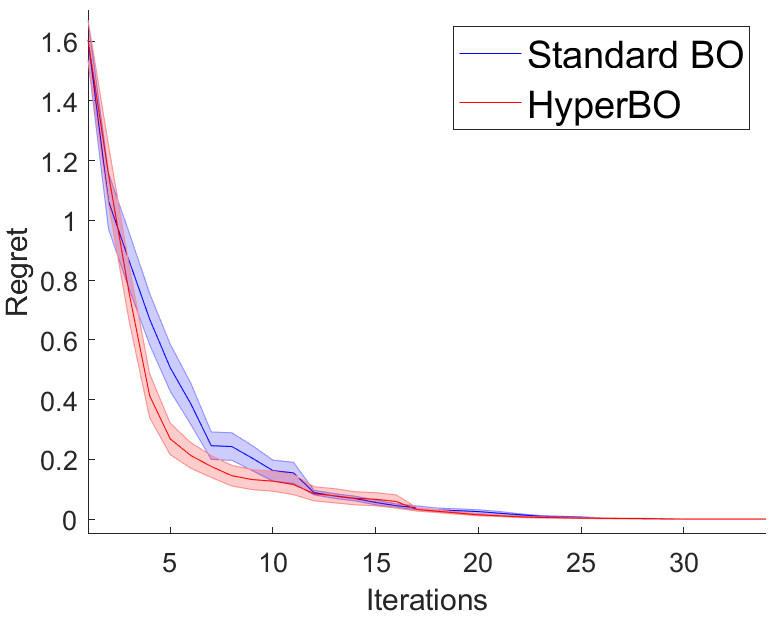}}}%
	\qquad
	\subfloat[Goldstein-Price]{{\includegraphics[height = 3cm,width=3.5cm]{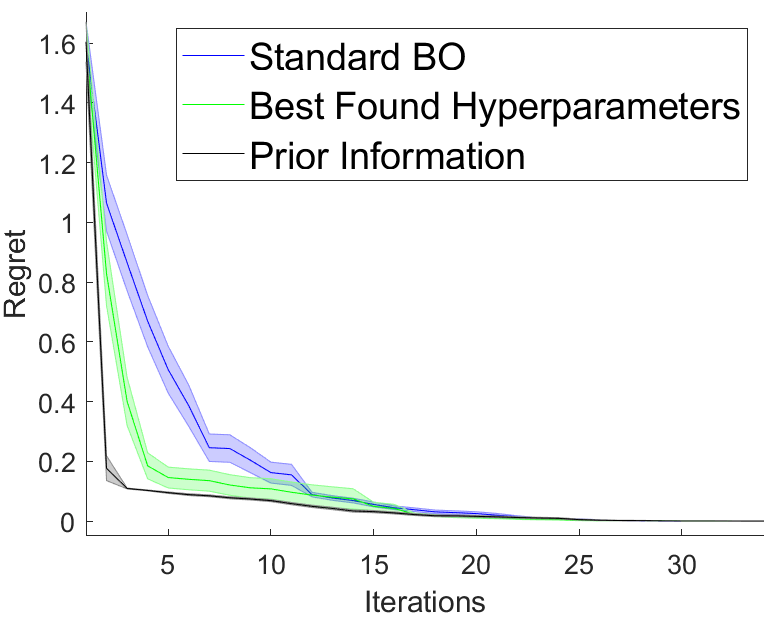}}}%
	\qquad
	\subfloat[Diabetes]{{\includegraphics[height = 3cm,width=3.5cm]{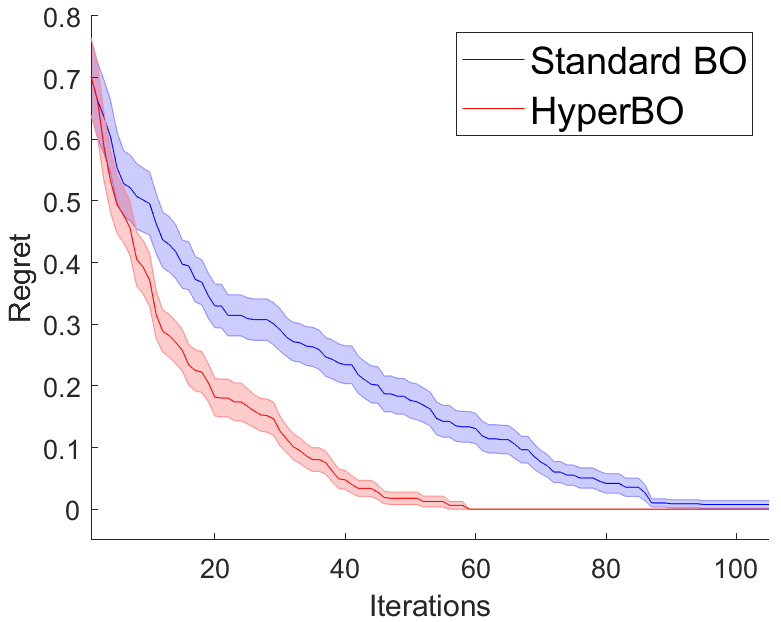}}}%
	\qquad
	\subfloat[Diabetes]{{\includegraphics[height = 3cm,width=3.5cm]{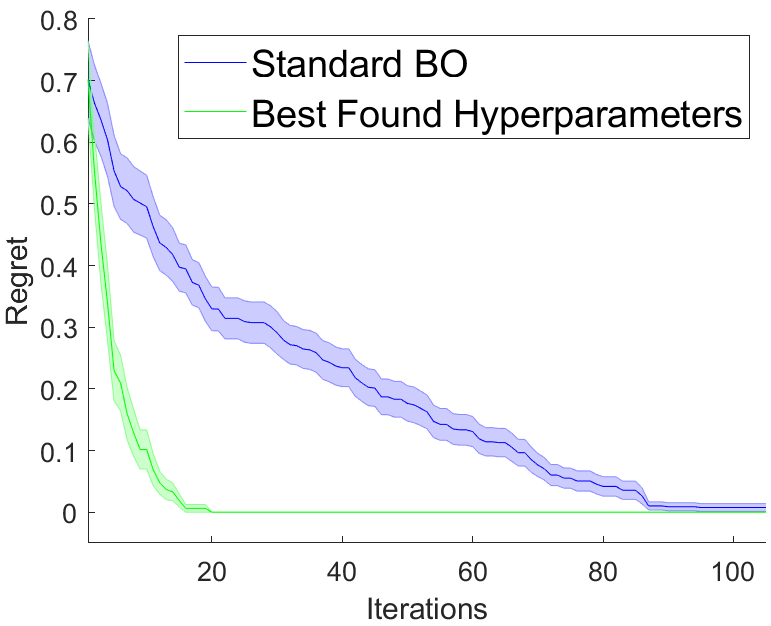}}}%
	\qquad
	\subfloat[Boston]{{\includegraphics[height = 3cm,width=3.5cm]{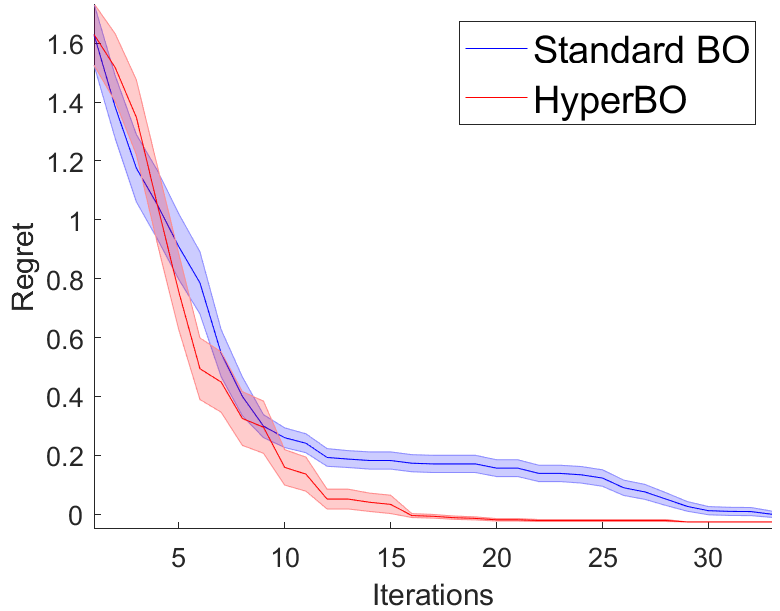}}}%
	\qquad
	\subfloat[Boston]{{\includegraphics[height = 3cm,width=3.5cm]{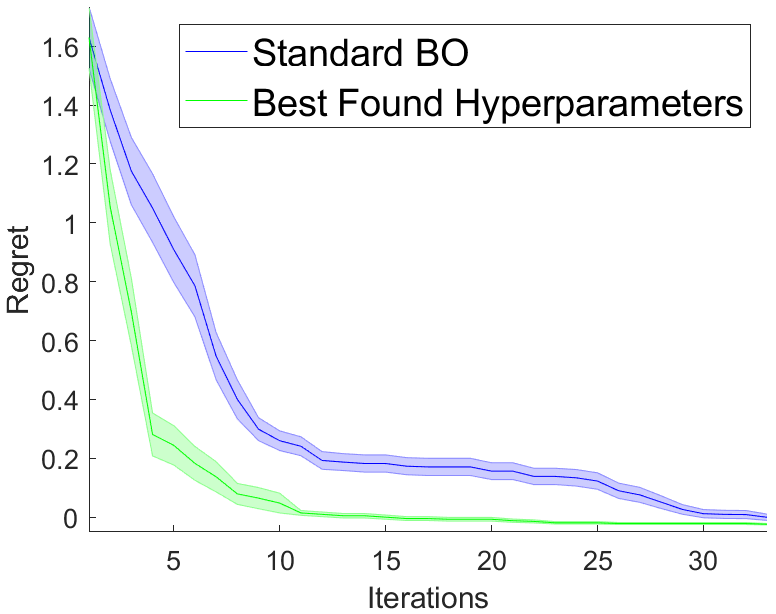}}}%
	\caption{Regret vs Iteration results of Monotonicity tuning experiments Case 1: a,c,e) Standard BO outperforms HyperBO in early iterations but converges faster in later iterations as monotonicity information is extracted. b,d,f) Case 2: GP with best monotonicity information outperforms Standard BO and is close in performance to GP with prior monotonicity information (in Goldstein-Price function) (gold standard).}%
	\label{fig:Monotonicity_experiments}%
\end{figure}

\subsection{Monotonicity Experiments}
For the monotonicity experiments, the goal is to find the optima quicker and also reveal the existence of monotonicity between variables and the output. The values of $\boldsymbol{\theta}$ were discretized to [-6, 0]. In addition, a constraint was applied such that a value of [-6] could not be applied in both the negative and positive directions of a given dimension. This reduces the search space with the assumption that a given dimension could not be both monotonically increasing and decreasing at the same time. For the following experiments, the value of $\lambda$ in the scoring function (\ref{eq:score2}) was set to $\frac{1}{2 \times 6 \times d}$, to normalize the regularization term. Figure \ref{fig:Monotonicity_experiments}a),c),e) shows the performance of the Predictive Modeling framework compared to a Standard BO. Figure \ref{fig:Monotonicity_experiments}b),d),f) compares the performance of the best discovered hyperparameters compared to the Standard BO, and where possible, against prior knowledge of monotonicity (synthetic function)(the gold standard).

\subsubsection{Virtual Derivatives}
\textcolor{blue}{As per Section 2.5.1, virtual derivatives were added to incorporate monotonicity into the model.} The number of virtual derivatives was set to $5d$ where $d$ is the dimension of the data. This was to allow the virtual derivatives to enforce monotonicity strongly in the early iterations (when observations are low), and for observations to control the fit of the GP in later iterations (when observations are higher than virtual derivatives). These points were randomly placed in the search space.

\subsubsection{Goldstein-Price Function}
The trend towards the maxima of the Goldstein-Price function can be seen to decrease with respect to the first dimension, and increasing with respect to the second dimension (gold standard). When applying HyperBO, the average of the best monotonicity information from the trials returned a value of -3.24 (negative trend) for $x1$ and 2.64 (positive trend) $x2$. The results reflect the expected monotonicity, demonstrating how HyperBO is able to extract monotonicity information for the latent function during the optimization process.

\subsubsection{Diabetes}
The Diabetes dataset comprises of 10 variables (Age, Sex, BMI, Blood Pressure and 6 blood serum variables), from 443 patients, to predict the diabetes progression 1 year after baseline, and the task is to determine the patient with the highest disease progression.  The data of a particular patient was found to be an outlier and removed (the individual had a high disease progression, despite being quite young). Figure \ref{fig:Diabetes_mono}a) displays the correlation between the input variables and the output. This is used as a metric to assess our findings of monotonicity against. Figure \ref{fig:Diabetes_mono}b) displays the average direction of monotonicity for each feature.  For all but BP, S1, S2 and S5, monotonicity matched the correlation coefficient direction.

\begin{figure}
	\centering
	\subfloat[Correlation values]{{\includegraphics[height = 2.5cm, width=2.6cm]{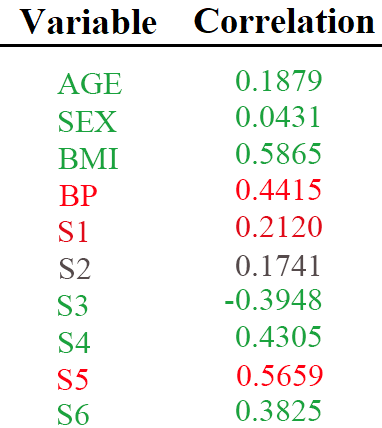} }}%
	\qquad
	\subfloat[Difference of average monotonicity value]{{\includegraphics[height = 2.5cm, width=3.5cm]{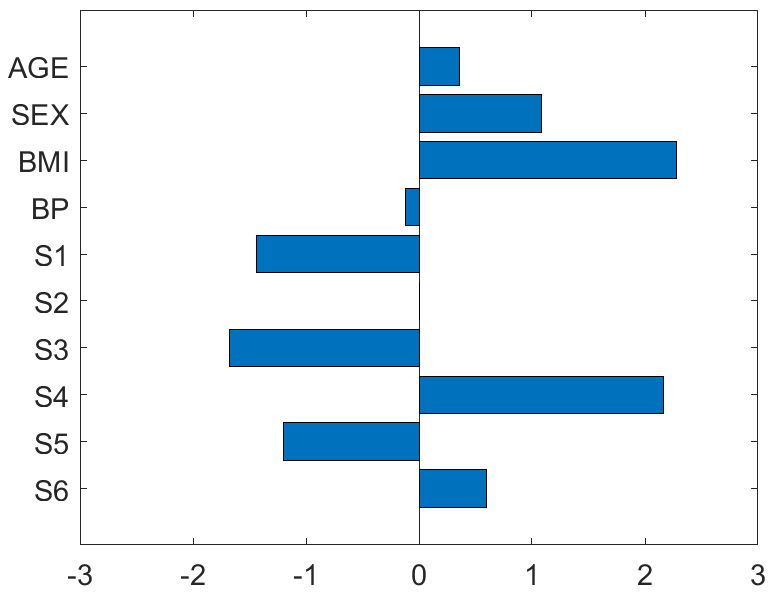} }}%
	\caption{Diabetes Dataset results. Green - Monotonicity matches correlation. Red - mismatch.}%
	\label{fig:Diabetes_mono}%
\end{figure}

\subsubsection{Boston Housing}
The Boston Housing dataset comprises of 13 features predicting the  median house price. The data was scaled with the median house price ranging between -2.5 and 2.5. For our experiment, only per capita crime rate (CRIM), non-retail business portion (INDUS), average number of rooms (RM), weighted distance to employment centres (DIS), pupil-teacher ratio (PTRATIO) portion of black residents (B), and \% of lower status (LSTAT)  were considered. The task was to discover the maximum median house price, whilst applying monotonic trends within the dataset. As there are multiple maxima in the dataset, the data was filtered to include only 1 maxima which was used in all experiments.

Figure \ref{fig:Boston_mono}a) shows the correlation coefficients between each input feature and the output. Figure \ref{fig:Boston_mono}b) displays the difference in the average monotonicity values for each dimension.

\begin{figure}
	\centering
	\subfloat[Correlation values]{{\includegraphics[height = 2.1cm, width=2.6cm]{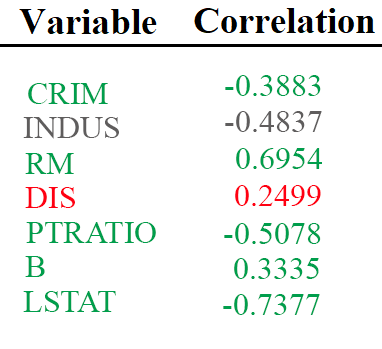} }}%
	\qquad
	\subfloat[Difference of average monotonicity value]{{\includegraphics[height = 2.1cm, width=3.5cm]{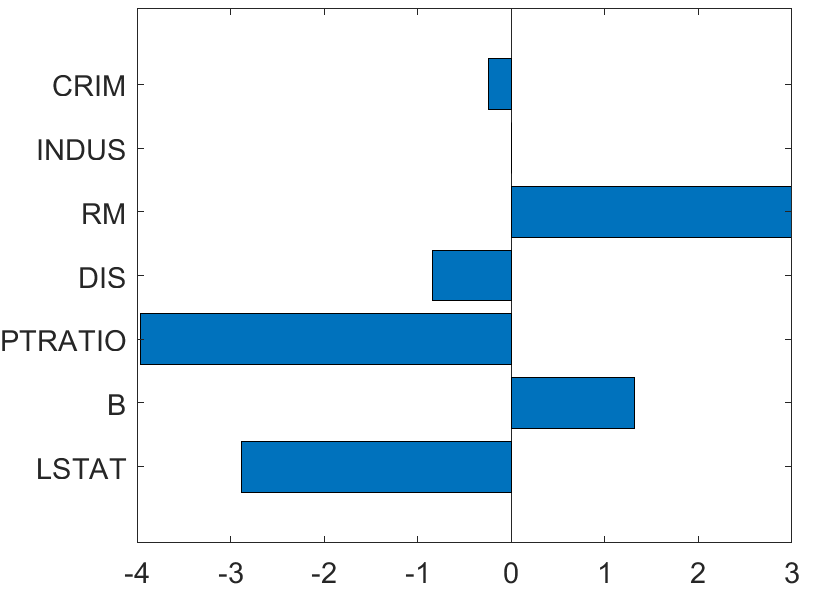} }}%
	\caption{Boston Housing dataset results. Green - Monotonicity matches correlation. Red - mismatch.}%
	\label{fig:Boston_mono}%
\end{figure}

\section{Conclusion}
This paper presented a novel mean of model selection during Bayesian Optimization search. The Predictive Modeling framework with HyperBO allows searching amongst the model space of GP's and assesses the performance of a given model in its progression toward the function optima. A scoring function was developed to assess the performance of a selected model at any given iteration. Experimental results applying the framework for monotonicity and length scale tuning have demonstrated its effectiveness in outperforming standard BO. Theoretical analysis proves convergence. 

\subsubsection*{Acknowledgements.}
This research was partially funded by the Australian Government through the Australian Research Council (ARC). Prof Venkatesh is the recipient of an ARC Australian Laureate Fellowship (FL170100006).

\bibliography{mybibfile}

\begin{thebibliography}{10}
\expandafter\ifx\csname url\endcsname\relax
  \def\url#1{\texttt{#1}}\fi
\expandafter\ifx\csname urlprefix\endcsname\relax\def\urlprefix{URL }\fi
\expandafter\ifx\csname href\endcsname\relax
  \def\href#1#2{#2} \def\path#1{#1}\fi

\bibitem{Clark2013}
A.~Clark, Whatever next? predictive brains, situated agents, and the future of
  cognitive science., The Behavioral and brain sciences 36 (2013) 181--204.
\newblock \href {https://doi.org/10.1017/S0140525X12000477}
  {\path{doi:10.1017/S0140525X12000477}}.

\bibitem{Clark2015a}
A.~Clark, Surfing uncertainty: Prediction, action, and the embodied mind,
  Oxford University Press, 2015.

\bibitem{DBLP:journals/corr/abs-1012-2599}
E.~Brochu, V.~M. Cora, N.~de~Freitas, \href{http://arxiv.org/abs/1012.2599}{A
  tutorial on bayesian optimization of expensive cost functions, with
  application to active user modeling and hierarchical reinforcement learning},
  CoRR abs/1012.2599 (2010).
\newblock \href {http://arxiv.org/abs/1012.2599} {\path{arXiv:1012.2599}}.
\newline\urlprefix\url{http://arxiv.org/abs/1012.2599}

\bibitem{Snoek2012}
J.~Snoek, H.~Larochelle, R.~P. Adams, Practical bayesian optimization of
  machine learning algorithms, in: Advances in neural information processing
  systems, 2012, pp. 2951--2959.

\bibitem{Freitas2015}
N.~de~Freitas, Taking the human out of the loop: A review of bayesian
  optimization (2015).

\bibitem{Duvenaud2013}
D.~Duvenaud, J.~R. Lloyd, R.~Grosse, J.~B. Tenenbaum, Z.~Ghahramani, Structure
  discovery in nonparametric regression through compositional kernel search,
  International Conference on Machine Learning (ICML) (2013).
\newblock \href {http://arxiv.org/abs/http://arxiv.org/abs/1302.4922v4}
  {\path{arXiv:http://arxiv.org/abs/1302.4922v4}}.

\bibitem{gonen2013localized}
M.~G{\"o}nen, E.~Alpayd{\i}n, Localized algorithms for multiple kernel
  learning, Pattern Recognition 46~(3) (2013) 795--807.

\bibitem{pmlr-v70-wang17h}
Z.~Wang, C.~Li, S.~Jegelka, P.~Kohli,
  \href{https://proceedings.mlr.press/v70/wang17h.html}{Batched
  high-dimensional {B}ayesian optimization via structural kernel learning}, in:
  D.~Precup, Y.~W. Teh (Eds.), Proceedings of the 34th International Conference
  on Machine Learning, Vol.~70 of Proceedings of Machine Learning Research,
  PMLR, 2017, pp. 3656--3664.
\newline\urlprefix\url{https://proceedings.mlr.press/v70/wang17h.html}

\bibitem{malkomes2016bayesian}
G.~Malkomes, C.~Schaff, R.~Garnett, Bayesian optimization for automated model
  selection, in: Advances in Neural Information Processing Systems, 2016, pp.
  2900--2908.

\bibitem{Malkomes:2018:ABO:3327345.3327498}
G.~Malkomes, R.~Garnett,
  \href{http://dl.acm.org/citation.cfm?id=3327345.3327498}{Automating bayesian
  optimization with bayesian optimization}, in: Proceedings of the 32Nd
  International Conference on Neural Information Processing Systems, NIPS'18,
  Curran Associates Inc., USA, 2018, pp. 5988--5997.
\newline\urlprefix\url{http://dl.acm.org/citation.cfm?id=3327345.3327498}

\bibitem{Rasmussen:2005:GPM:1162254}
C.~E. Rasmussen, C.~K.~I. Williams, Gaussian Processes for Machine Learning
  (Adaptive Computation and Machine Learning), The MIT Press, 2005.

\bibitem{Basu2017}
K.~Basu, S.~Ghosh, Analysis of thompson sampling for gaussian process
  optimization in the bandit setting, arXiv preprint arXiv:1705.06808 (2017).

\bibitem{Srinivas:2010:GPO:3104322.3104451}
N.~Srinivas, A.~Krause, S.~Kakade, M.~Seeger,
  \href{http://dl.acm.org/citation.cfm?id=3104322.3104451}{Gaussian process
  optimization in the bandit setting: No regret and experimental design}, in:
  Proceedings of the 27th International Conference on International Conference
  on Machine Learning, ICML'10, Omnipress, USA, 2010, pp. 1015--1022.
\newline\urlprefix\url{http://dl.acm.org/citation.cfm?id=3104322.3104451}

\bibitem{bull2011convergence}
A.~D. Bull, Convergence rates of efficient global optimization algorithms.,
  Journal of Machine Learning Research 12~(10) (2011).

\bibitem{Wang2018}
W.~Wang, W.~J. Welch, Bayesian optimization using monotonicity information and
  its application in machine learning hyperparameter, ICML AutoML Workshop
  (2018).
\newblock \href {http://arxiv.org/abs/http://arxiv.org/abs/1802.03532v2}
  {\path{arXiv:http://arxiv.org/abs/1802.03532v2}}.

\bibitem{Golchi2015}
S.~Golchi, D.~R. Bingham, H.~Chipman, D.~A. Campbell, Monotone emulation of
  computer experiments, SIAM/ASA Journal on Uncertainty Quantification 3~(1)
  (2015) 370--392.

\bibitem{Li2018}
C.~Li, R.~Santu, S.~Gupta, V.~Nguyen, S.~Venkatesh, A.~Sutti, D.~R. D.~C. Leal,
  T.~Slezak, M.~Height, M.~Mohammed, I.~Gibson, Accelerating experimental
  design by incorporating experimenter hunches, in: Proc. IEEE Int. Conf. Data
  Mining (ICDM), 2018, pp. 257--266.
\newblock \href {https://doi.org/10.1109/ICDM.2018.00041}
  {\path{doi:10.1109/ICDM.2018.00041}}.

\bibitem{riihimaki2010gaussian}
J.~Riihim{\"a}ki, A.~Vehtari, Gaussian processes with monotonicity information,
  in: Proceedings of the Thirteenth International Conference on Artificial
  Intelligence and Statistics, 2010, pp. 645--652.

\end{thebibliography}

\end{document}